\newtheorem{lemma}{Lemma}
\newtheorem{defn}{Definition}
\newtheorem{remark}{Remark}
\newtheorem{theorem}{Theorem}
\newcommand{\mean}{{\rm mid}}
\title{Loss of Distributed Coverage Using Lazy Agents Operating Under Discrete, Local, Event-Triggered Communication}
\author{Edward Vickery\footnote{EV is Head of Solution Operations at SHL, London, United Kingdom. Email: ted.vickery@gmail.com} \ and 
Aditya A. Paranjape\footnote{AP is Senior Scientist at TCS Research, Tata Consultancy Services Ltd., Pune. He is also 
Honorary Lecturer at Imperial College London and Visiting Associate Professor at the Indian Institute of Technology 
Bombay. Email: aditya.paranjape@gmail.com.}}
\date{}
\begin{document}
\maketitle

\section*{Abstract}
Continuous surveillance of a spatial region using distributed robots and sensors is a well-studied application in the area of multi-agent systems. This paper investigates a practically-relevant scenario where robotic sensors are introduced 
asynchronously and inter-robot communication is discrete, event-driven, local and asynchronous. Furthermore, we work with lazy robots; i.e., the robots seek to minimize their area of responsibility by equipartitioning the domain to be covered. 
We adapt a well-known algorithm which is practicable and 
known to generally work well for coverage problems. For a specially chosen geometry of the spatial domain, we show that 
there exists a non-trivial sequence of inter-robot communication events which leads to an instantaneous loss of coverage when the number of robots exceeds a certain threshold. The same sequence of events preserves coverage and, further, leads to an equipartition of the domain when the number of robots is smaller than the threshold.
This result demonstrates that coverage guarantees for a given algorithm might be sensitive to the number of robots and, therefore, may not scale in obvious ways. It also suggests that when such algorithms are to be verified and validated prior to field deployment, the number of robots or sensors used in test scenarios should match that deployed on the field.

\section{Introduction}
The development of autonomous vehicles in recent years has expanded the range of tasks that can be carried out without human intervention. This paper explores one such complex task, namely that of continuous surveillance of an environment using autonomous mobile robots. The problem of surveillance and monitoring has multiple facets, depending on the nature of the mission and the sensors 
and robots involved in the task \cite{chung18}. The problem 
addressed in the paper is effectively that of partitioning
an environment for the purpose of 
continuous surveillance using a distributed scheme
which relies on local, event-triggered communication between mobile robots. The robots seek to minimize their individual areas of
coverage while ensuring that the environment as a whole is covered.
The distributed nature of the communication and task allocation 
between the robots leads to the following question with
practical ramifications: {\it if a reasonably designed coverage algorithm is proven to work for some non-trivial range of numbers of robots, can it fail to work when the number of robots is changed to outside the proven range?} We answer this question in the affirmative by constructing an example.

\subsection{Overview of the literature}\label{sec:lit}
Optimal sensor placement problems are built upon the premise 
that the number of sensors and their placement can be mapped to an objective function which needs to be either maximized or minimized 
subject to constraints related to the sensor and the environment.
When the objective function satisfies submodularity properties, it 
is possible to use greedy algorithms to solve these problems up to 
provable bounds \cite{clark17}. This approach has been investigated for 
sensor placement in \cite{krause2006near, krause2007near, near_optimal_sensor}. Similar work in \cite{singh2006efficient} presents an efficient path planning 
algorithm for multiple agents in the presence of resource constraints for the
agents. Integer programming techniques can be employed if the structure of the problem permits appropriate spatial discretization \cite{char22}. 

All of these approaches assume the existence of a centralized decision-making system that is aware of all active sensors and can allocate them to individual positions. The centralized decision-making approach becomes impractical when robotic agents are inserted into environments where continuous communication with a central node is impossible (e.g., hostile environments or ones which are naturally packed with communication obstacles). In such cases, inter-robot communication is asynchronous, local and event-driven. A canonical coverage problem is that of partitioning the environment between the mobile robots equitably and dynamically using a
distributed scheme. 

Distributed coverage algorithms that achieve a Voronoi partition of convex environments using a distributed approach were proposed in \cite{Coverage}. The distributed scheme is dynamic, in that the
Voronoi partitions are scaled and rearranged dynamically to yield an optimal partition together with a motion planning algorithm for the mobile robots. This approach has been extended to non-convex domains \cite{Heterogeneous, Nonconvex, bhat13}, to robots with finite communication radii \cite{r2}, to environments with unknown sensory functions \cite{schwager09} and to problems where the cost function for
a sensor can be augmented or mixed with that of its neighbors \cite{schwager11}. A similar approach can be used to balance raw
coverage (i.e., the area covered) with the quality of the coverage for a spatial distribution of events \cite{an21}. 

The distinction between continuous coverage and reliable detection
of flag events (for which continuous coverage is sufficient but
not necessary) is brought out for time varying environments in \cite{dynamic_coverage} wherein sensor movement is optimized in 
order to maximise the probability of identifying flag events.

A continuous flow of information may not be required in order to maintain coverage. An algorithm which uses events triggered by individual agents in order to guarantee coverage is presented in \cite{Selftriggered}.
An algorithm which caters to gossip-based inter-robot communication has been investigated in 
\cite{Gossip} where random pairs of agents are allowed 
to communicate and relocate based on local information exchange. 

\subsection{Contribution}
The algorithm analyzed in this paper is an adaptation of the algorithm in \cite{Gossip} for robots that use a lazy scheme (in a sense 
which will be made precise later) to repartition or resize their areas of responsibility. We wish to examine whether such lazy behavior (which may be viewed as the equivalent of greedy behavior in 
optimization problems with an agent-level penalty function that dominates the global reward function for coverage) can result in a
loss of coverage and the conditions under which coverage is lost.

Towards that end, we construct a simplified example and a sequence of events which leads to an instantaneous loss of coverage when the number of robots exceeds a non-trivial threshold. Interestingly, the same sequence of events actually leads to an equipartition of the domain 
(i.e., the optimum solution) for a smaller number of robots. This demonstration suggests that the success of multi-agent algorithms operating in the
presence of restricted communication might be sensitive to the number of 
agents involved, above and beyond the known complexities that arise
due to the ``scale'' of the problem or the geometry of the environment.

The rest of the paper is organized as follows. Preliminaries are laid out in Sec.~\ref{sec:prelim}, including the class coverage algorithms considered in the paper. The main theoretical results of the paper are presented in Sec.~\ref{sec:main}, and numerical experiments are
used to generalize those results in Sec.~\ref{sec:sim}.


\section{Preliminaries}\label{sec:prelim}
\subsection{Unit circle}
Let $S^1$ denote the unit circle parametrized by the angular variable $\theta \in [0,\,2\pi)$. We write 
$\theta \in S^1$. Let $\theta_1,\,\theta_2 \in S^1$. The length of the shorter arc between these points is given by
\begin{equation}
d(\theta_1,\,\theta_2) = d(\theta_2,\theta_1) = \begin{cases} |\theta_2 - \theta_1|, & |\theta_2 - \theta_1| < \pi \\ 
2\pi - |\theta_2 - \theta_1| & {\rm otherwise}\end{cases}
\label{eq:arcl}
\end{equation}
The centroid of the two points along the short arc is given by
\begin{equation}
\mean(\theta_1,\theta_2) = \begin{cases} 0.5(\theta_2 + \theta_1), & |\theta_2 - \theta_1| < \pi \\ 
{\rm mod}(\pi + 0.5(\theta_2 + \theta_1),~2\pi)& {\rm otherwise}\end{cases}
\label{eq:cg}
\end{equation}

The set $S^1$ is isomorphic to a unit circle in the complex
plane via the invertible mapping $T(\theta) = e^{j\theta},~\theta \in S^1$.
\begin{defn}[Positive or clockwise rotation]\label{def:clockwise} An arc in $S^1$ 
is said to be a clockwise or positively directed
arc from $\theta_l \in S^1$ to $\theta_u \in S^1$ 
if there exists a continuous function $c: [0,1]\to\mathbb{R}$
and $\omega \geq 0$ such that: (i) every point on the arc can be represented as $c(t)$ for some $t\in[0,1]$, with $c(0) = \theta_l$ and $c(1) = \theta_u$; (ii) $T(c(t)) = e^{j\omega t} T(c(0))$; and (iii) $c(t_1) = c(t_2)$ for $t_1 \neq t_2$ iff $\theta_l = \theta_u$ and $\omega = 0$.
\end{defn}

\begin{defn}
We say that $\theta_u \succ \theta_l$ if the shortest
arc from $\theta_l$ to $\theta_u$ (in that order) is traced via a
clockwise rotation.
\end{defn}

\begin{defn}\label{def:oplus}
We define the addition operator $\oplus$ to denote a clockwise rotation on $S^1$; i.e., $\theta_1 \oplus \theta_2$ denotes a 
clockwise rotation of magnitude $\theta_2$ starting from $\theta_1$.
It is clear that $\theta_1 \oplus \theta_2 = \theta_2 \oplus \theta_1$. We define the operator $\ominus$ to denote an 
anticlockwise rotation on $S^1$: $\theta_1 \ominus \theta_2$ denotes an anti-clockwise rotation of magnitude $\theta_2$ starting with $\theta_1$. Finally, the standard summation operator $\sum(\cdot)$ will denote a sum using $\oplus$ when the arguments belong to $S^1$.
\end{defn}

If $\theta_2 \succ \theta_1$, Definition~\ref{def:oplus} allows us to
write
\begin{equation}
\theta_1 = \mean(\theta_1,\,\theta_2) \ominus 0.5\,d(\theta_1,\,\theta_2),~~
\theta_{2} = \mean(\theta_1,\,\theta_2) \oplus 0.5\,d(\theta_1,\,\theta_2), 
\label{eq:addsubt}
\end{equation}

\subsection{Partitions of closed, bounded regions in $\mathbb{R}^2$}
Let $Q \subset \mathbb{R}^2$ be a closed, bounded domain containing $N \geq 1$ agents or sensors. Let $p_i \in Q$ denote the position of the $i^{\rm th}$ agent.
A partition of size $N$ of $Q$ is a set $V = \{V_1,\dots,\,V_N\}$, where $V_i \subset Q$ are closed and satisfy $\cup_{i=1}^N V_i = Q$. We may further prescribe that each agent lie inside its own partition. The coverage problem usually considered in the literature involves finding a partition $V^\ast$ which solves the problem
\begin{equation}\label{eq:cost}
V^\ast = \arg\min_V \mathcal{H}_V(\mathcal{P}) \triangleq \sum_{i=1}^{n}\int_{V_i} f_i(\|q-p_i\|)d\phi(q)
\end{equation}
where $\phi: Q\to \mathbb{R}_{+}$ satisfying $\int_{Q}d\phi(q) = 1$ is a weighting function 
(also called the sensing function) and $f_i:\mathbb{R}_+ \rightarrow\mathbb{R}_+$ represents the sensing performance of the 
agent $i$ as a function of its distance from the sensed location $q\in Q$. Lloyd's algorithm and its variants \cite{Controlbook}
are used to partition $Q$ dynamically into Voronoi cells which leads, in turn, to the optimal partition which is itself
a Voronoi partition. Partitions may also be constructed organically to satisfy sensing constraints, 
such as in \cite{Artgalleries}, without solving the optimization problem above. In this paper, we will consider 
equipartitions which are defined as follows.
\begin{defn}\label{def:equip}
For $N$ agents located at $\{p_1,\dots,\,p_N\}$, $p_i \in Q$ for all $i$, we call $W = \{W_1,\dots,\,W_N\}$ an {\em equipartition} of $Q$ if it satisfies: (i) $W_i \subseteq Q$ is closed for all $i$; (ii) $p_i \in W_i$; (iii) ${\rm area}(W_i) = {\rm area}(W_j) = {\rm area}(Q)/N$ for all $i,\,j$; and (iv) $\cup_{i=1}^N W_i = Q$. It follows that ${\rm int}(W_i) \cap {\rm int}(W_j) = \emptyset$ for $i \neq j$ and 
$p_i \in {\rm int}(W_j)$ if and only if $i=j$.
\end{defn}
It is clear that an equipartition $W$ solves \eqref{eq:cost} when $\phi(q)$ is uniform and the function $f_i(\cdot)$ is identical for all $i$.

\subsection{Gossip-based coverage algorithms}\label{sec:cover}
In this section, we describe the {\em class of} distributed coverage algorithms, based on \cite{Gossip}, which are subsequently specialized and analyzed in the paper. Briefly, a sufficient condition for coverage is that 
the spatial domain equals the union of the areas assigned to individual agents.
This assignment is carried out in a distributed manner by the agents with
the aim of creating an equipartition while ensuring that coverage is not lost. 
We make four important assumptions: (1) the agents are assumed to be {\em lazy} in a sense that will be made precise presently; (2) every agent is aware of the geometry of the environment before entry but not of the other agents; (3) at most one {\em interaction event} (defined presently)
can occur at any given point in time, and (4) agents' response to events, including repartitioning, is instantaneous and is thus complete before the next event. Unlike \cite{Selftriggered}, we assume that an interaction event cannot be triggered by one or more agents. Rather, we model it as a random occurrence, which is a proxy for two agents
coming within their mutual communication radius either in the course
of exploring their area of responsibility or responding to an 
environmental event.

\begin{defn}
Consider an agent in a closed, connected domain $Q$ with area
$|Q|$ and suppose that it has knowledge $K$ about the agents in $Q$,
with $|K|$ equal to the number of agents that it is aware of 
(including itself). Suppose that the agent allocates itself a domain $A \subseteq Q$ with area $|A|$. The agent is said to be lazy with an $\epsilon$ degree of 
altruism if 
$$
|A| = \frac{|Q|}{|K|} + \epsilon
$$
where $\epsilon$ can be time-varying and agent-specific. The agent 
is said to be lazy (with no mention of altruism) if $\epsilon = 0$.
\end{defn}

\begin{remark}
The numerical area $|A|$ of a partition $A$ can be calculated by scaling
using the sensor function $\phi(q)$ ($q \in Q$) to ensure that the 
partitions are equitable. We assume that $\phi(\cdot)$ is uniform, so that $|A|$ is the usual Euclidean area of $A$.
\end{remark}

\begin{defn}
An interaction event, identified by the time $t$ at which it occurs, is defined as an interaction between precisely two agents $i$ and $j$. The interaction consists of (i) updating each agent's knowledge 
$K_i,\,K_j \leftarrow K_i \cup K_j$, and (ii) repartitioning 
and resizing of their individual areas of responsibility as per
the guiding algorithm. Since we do not model environmental events
in this paper, we will use the word event to refer hereafter
to an interaction event.
\end{defn}

There are two types of (interaction) events. In the first type,
an agent enters the domain for the first time, with no
knowledge of the other agents, and encounters an existing agent. It 
makes a copy of the knowledge possessed by the existing agent 
and the two agents partition {\em the existing} agent's area equitably
into disjoint halves while also taking over responsibility for some 
of the surrounding area. 
In the second type of events, an agent that is already in the
domain interacts with another agent that is also in the domain. The two agents update each other's information so that both possess the union of their prior individual information. They repartition their individual areas as per the problem-specific guiding algorithm and the 
process continues. The pseudocode is described in Algorithm~\ref{alg:general}. 

\begin{algorithm}
\caption{General algorithm for partitioning a domain}
\label{alg:general}
\begin{algorithmic}
\REQUIRE A domain $Q \in \mathbb{R}^2$ and $N$ agents in all
\STATE Initialize: time $t = 0$; initial positions and areas
of responsibility for the agents already in $Q$. The union of the areas of responsibility of agents in $Q$ equals $Q$
\WHILE{Stopping condition not reached}
\IF{Enters new agent $j$}
\STATE New agent $j$ interacts with agent $i$ already in $Q$
\STATE Knowledge exchange $K_j \leftarrow K_i$
\STATE Disjoint areas of responsibility assigned $A_i,\,A_j\in Q$; $|A_i| = |A_j| \sim 1/|K_i| + \epsilon[t]$. 
\ELSE{}
\STATE Existing agents $i, j$ interact
\STATE Knowledge sharing: $K_{i}= K_{j} \leftarrow K_i \cup K_j$
\STATE Areas of responsibility reassigned so that 
$|A_{\{\cdot\}}| \sim Q/|K_{\{\cdot\}} + \epsilon_{\{\cdot\}}[t]$
\ENDIF
\STATE Stopping condition reached if $t = t_{\max}$ or $Q$ is equipartitioned
or no further events are feasible
\ENDWHILE
\end{algorithmic}
\end{algorithm}

We note that several steps of Algorithm~\ref{alg:general} have
been deliberately left vague. These can be made precise for individual problems, as we illustrate in the next section. We have only prescribed that (i) exchange of knowledge should correspond to all agents learning the union of their prior knowledge, and (ii) the areas assigned or reassigned to all participants in an interaction should have equal magnitudes up to the degree of altruism of individual agents.


\section{Main Results}\label{sec:main}
In this section, we consider a simplified coverage problem and
apply a corresponding manifestation of Algorithm~\ref{alg:general} to solve 
it. We assume that the agents are lazy (i.e., $\epsilon = 0$ uniformly for all agents). We construct a sequence of events
in Sec.~\ref{sec:main2} which leads to equipartition when the
number of agents is less than a certain threshold, 
and to loss of coverage when the number of agents exceeds the threshold.

\subsection{Geometry of the domain and some notation}
Let $Q_A = \{(x,y)\in\mathbb{R}^2~|~\delta^2< x^2 + y^2 \leq 1\}$ denote the domain of interest
in $\mathbb{R}^2$, where $0 < \delta \ll 1$. Clearly, $Q_A$ is a unit disc with a small hole at the centre. 
We can recast the problem into one of partitioning the domain $Q = S^1$ (the unit circle). It is evident 
from basic geometry that a partition of $S^1$ can be mapped to an equivalent angular slice of $Q_A$. Thus, 
an equipartition of $S^1$ can be mapped to an equipartition of $Q_A$.

The angular position of the $i^{\rm th}$ agent in $Q = S^1$
is denoted by $\theta_i \in [0,\,2\pi)$ once it is introduced in the domain, 
and $n_i \geq 1$ denotes the number of agents that it is aware of, including itself. The arc of dominance of the agent $i$ is denoted
by $[\theta_i^l,\,\theta_i^u]$ with $\theta_i^u \succ \theta_i^l$ and
$\theta_i^c = \mean(\theta_i^l,\,\theta_i^u)$. Finally, let $S_i = d(\theta_i^l,\,\theta_i^u)$. We note that all of these variables
are functions of time $t$; for brevity, we omit that argument unless necessary.
\begin{defn}\label{defn:lazy}
An agent is said to be {\em lazy} if its arc of dominance on
$S^1$ has length $S_i = 2\pi/n_i$.
\end{defn}
The following result can be derived readily using basic
geometry and we omit a formal proof.
\begin{lemma}\label{lem:sij}
Suppose two lazy agents $i$ and $j$ are aware of $n_i$ and $n_j $ agents, respectively. Then,
the overlap between their areas of dominance $S_{i,j} > 0$ if and only if 
$$
\frac{\pi}{n_i} + \frac{\pi}{n_j} - d(\theta_i^c,\,\theta_j^c) > 0
$$
Moreover, if $n_i > 1$ and $n_j > 1$, the overlap is given by
\begin{equation}
S_{i,j} = {\min}\left({\max}\left(\frac{\pi}{n_i} + \frac{\pi}{n_j} - d(\theta_i^c,\,\theta_j^c),~ 0\right),\,\frac{2\pi}{n_i},\,\frac{2\pi}{n_j}\right)
\end{equation}
If $n_i = 1$, $S_{i,j} = 2\pi/n_j$, and likewise if $n_j = 1$.
\end{lemma}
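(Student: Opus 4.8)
The plan is to reduce the circular overlap computation to an intersection of two real intervals, after first establishing that the two arcs can meet in at most one connected component. First I would use Definition~\ref{defn:lazy} together with \eqref{eq:addsubt} to write each arc of dominance explicitly in terms of its centre and half-width: since agent $i$ is lazy, $S_i = 2\pi/n_i$, so its arc is $[\theta_i^c \ominus r_i,\,\theta_i^c \oplus r_i]$ with half-width $r_i = \pi/n_i$, and similarly $r_j = \pi/n_j$ for agent $j$. The positivity claim is then immediate in spirit---the two arcs meet exactly when the sum of their half-widths exceeds the centre-to-centre arc distance $d(\theta_i^c,\theta_j^c)$---but making this rigorous on $S^1$ requires controlling the wraparound, which is the crux of the argument.

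The key structural step, and the one I expect to be the main obstacle, is to rule out the arcs overlapping on ``both sides'' of the circle. When $n_i,n_j>1$ we have $r_i,r_j<\pi$ and, crucially, $r_i+r_j = \pi(1/n_i+1/n_j)\le\pi$ because $1/n_i+1/n_j\le 1$ for integers $\ge 2$. I would argue that a two-component intersection would force the two arcs to cover all of $S^1$, so that by inclusion--exclusion the overlap would have length $2r_i+2r_j-2\pi\le 0$; hence no overlap of positive measure can occur on the far side, and the intersection is a single arc (or empty). This is precisely where the hypothesis $n_i,n_j>1$ is used, and it also explains why the degenerate cases must be excluded from the general formula.

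With the single-component property in hand, I would place $\theta_i^c$ at the origin and $\theta_j^c$ at clockwise distance $d = d(\theta_i^c,\theta_j^c)\in[0,\pi]$, and identify the overlap with the intersection of the real intervals $[-r_i,\,r_i]$ and $[d-r_j,\,d+r_j]$. A short case analysis then yields the length: partial overlap gives $r_i+r_j-d$; when one arc contains the other the length saturates at $2r_i$ or $2r_j$; and disjointness gives $0$. Collecting these cases reproduces $S_{i,j}=\min(\max(r_i+r_j-d,\,0),\,2r_i,\,2r_j)$, and the positivity criterion $r_i+r_j-d>0$ falls out of the same computation.

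Finally I would dispose of the degenerate cases $n_i=1$ or $n_j=1$ separately: here the relevant half-width equals $\pi$, so the corresponding arc is the entire circle and the overlap is simply the whole of the other agent's arc, $2\pi/n_j$ (resp.\ $2\pi/n_i$). I would remark that the general formula fails in this situation---the capped min--max expression can undershoot $2\pi/n_j$ when $d$ is near $\pi$---precisely because the arc now wraps completely and the single-component reduction of the previous step breaks down, which is why the statement treats $n_i=1$ on its own.
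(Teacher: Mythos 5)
The paper gives no proof of this lemma at all---it is stated as something that ``can be derived readily using basic geometry''---so there is nothing to compare your argument against except the claim itself. Your proposal is correct and supplies exactly the details the paper elides. The one step that genuinely requires care is ruling out a two-component intersection of the arcs, and you handle it properly: since $r_i+r_j=\pi/n_i+\pi/n_j\le\pi$ for $n_i,n_j\ge 2$, a wraparound overlap would have length $2r_i+2r_j-2\pi\le 0$, so the intersection is a single (possibly empty or degenerate) arc and the problem reduces to intersecting the intervals $[-r_i,r_i]$ and $[d-r_j,d+r_j]$ with $d\in[0,\pi]$; the min--max formula then follows from the standard three-case analysis, and the positivity criterion falls out of it. Your separate treatment of $n_i=1$ (the arc is all of $S^1$, so the overlap is the other agent's full arc) and your observation that the capped formula can undershoot $2\pi/n_j$ there correctly explain why the lemma excludes that case from the general expression. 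I see no gap.
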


\begin{defn}\label{defn:cul}
For every agent $j$, we denote the area of overlap with its immediate clockwise and anti-clockwise neighbors by $C^u_j$ and $C^u_l$, respectively. In terms of the notation introduced
in Lemma~\ref{lem:sij},
\begin{eqnarray*}
C_j^u &=& S_{j,k},~k = \arg\min_{i\neq j}\{d(\theta^c_j, \theta^c_i)~|~\theta_i^c \succ \theta^c_j\} \\
C_j^l &=& S_{j,k},~k = \arg\min_{i\neq j}\{d(\theta^c_j, \theta^c_i)~|~\theta_i^c \prec \theta^c_j\}
\end{eqnarray*}
\end{defn}

\subsection{Addition of a new agent and repartitioning}
Suppose that a new agent $k$ is added to $Q$ at time $t$
and it interacts with an agent $j$ that is already in $Q$. 
As per Algorithm~\ref{alg:general}, they update their 
knowledge of the number of agents so that 
$n_k[t] = n_j[t] = n_j[t-1] + 1$. Thereafter, the two 
agents $j$ and $k$ assign themselves an area of responsibility of size $2\pi/n_k[t]$ such that the two arcs intersect at the point $\theta_j^c[t-1]$ (i.e., at the midpoint of agent $j$'s previous area of responsibility). The pseudocode for this process is
presented in Algorithm~\ref{alg:agent added}.

\begin{algorithm}[htb]
\caption{New agent repartition algorithm}
\label{alg:agent added}
\begin{algorithmic}
\REQUIRE agent $i$ in $Q$ located at $\theta_i^c$ and aware of $n_i(\geq 1)$ agents
\REQUIRE agent $j$ enters $Q$ and interacts with $i$
\STATE $n_j \leftarrow n_i + 1$, $n_i \leftarrow n_i+1$
\STATE $\theta_{i}^u \leftarrow \theta_i^c$, $\theta_{j}^l \leftarrow \theta_i^c$
\STATE $\theta_{i}^l \leftarrow \theta_i^u \ominus 2\pi/n_i$, $\theta_{i}^c \leftarrow \mean(\theta_{i}^l,\,\theta_{i}^u)$
\STATE $\theta_{j}^u \leftarrow \theta_{j}^l \oplus 2\pi/n_j$, $\theta_{j}^c \leftarrow \mean(\theta_{j}^l,\,\theta_j^u)$
\end{algorithmic}
\end{algorithm}

The next result shows that the process of adding a new agent 
is not detrimental to instantaneous coverage in itself.
\begin{lemma}\label{prop:1} Suppose that the domain $Q = S^1$ is covered by $k > 1$ agents and suppose that a new agent $k+1$ is added to the domain $Q$. Then, Algorithm \ref{alg:agent added} ensures that instantaneous coverage is not lost. Moreover, 
the resulting area of dominance of agent $k+1$ overlaps 
with that of at least one other agent in $[1,\,k]$
\end{lemma}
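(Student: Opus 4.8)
The plan is to track exactly how Algorithm~\ref{alg:agent added} reshapes the arcs and then compare the post-event coverage with the pre-event coverage. Write $m \in [1,k]$ for the resident agent that the newcomer interacts with (the agent $i$ of Algorithm~\ref{alg:agent added}), let $n := n_m$ be its count before the event, and note that after the event both $m$ and $k+1$ are aware of $n+1$ agents and are lazy, so by Definition~\ref{defn:lazy} each receives an arc of length $2\pi/(n+1)$. Using \eqref{eq:addsubt} together with the assignments in Algorithm~\ref{alg:agent added}, I would first record the three relevant arcs in centred form: before the event agent $m$ owns $A_m=[\theta_m^c\ominus \pi/n,\ \theta_m^c\oplus \pi/n]$; after the event it owns $A_m'=[\theta_m^c\ominus 2\pi/(n+1),\ \theta_m^c]$ and the newcomer owns $A_{k+1}=[\theta_m^c,\ \theta_m^c\oplus 2\pi/(n+1)]$, both hinged at the old midpoint $\theta_m^c$. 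Every other agent's arc is untouched by the event.

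For the coverage claim, the key observation is that $A_m'\cup A_{k+1}$ is precisely the arc centred at $\theta_m^c$ of half-width $2\pi/(n+1)$, so it contains the old arc $A_m$ (half-width $\pi/n$) as soon as $2\pi/(n+1)\ge \pi/n$, i.e. $n\ge 1$, which holds by hypothesis. Since the pre-event arcs covered $S^1$ and only agent $m$'s arc was altered, being replaced by the larger set $A_m'\cup A_{k+1}\supseteq A_m$, the union of all post-event arcs still equals $S^1$; hence instantaneous coverage is preserved. The inequality is tight exactly when $n=1$ (the newcomer and $m$ then split a full circle into two semicircles), a borderline case worth isolating early.

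For the overlap claim I would compare the clockwise reach of the newcomer with that of the old arc of $m$. The newcomer extends clockwise from $\theta_m^c$ out to $\theta_m^c\oplus 2\pi/(n+1)$, whereas $m$'s pre-event arc reached clockwise only to $\theta_m^c\oplus \pi/n$. Because $2\pi/(n+1)>\pi/n \iff n>1$, when $n\ge 2$ the newcomer's arc strictly overshoots $m$'s former clockwise endpoint; the short arc immediately clockwise of $\theta_m^c\oplus\pi/n$ lay outside $A_m$ and therefore, by pre-event coverage, was covered by some resident agent $m'\neq m$ whose arc is unchanged. That agent's arc thus meets $A_{k+1}$ on a set of positive length, so $S_{k+1,m'}>0$ by Lemma~\ref{lem:sij}, which is the desired conclusion. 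Note that $A_{k+1}$ and the new arc $A_m'$ of $m$ meet only at $\theta_m^c$, a zero-length contact, so $m$ itself cannot be the witnessing neighbour, and the overlap must be sought among the other residents.

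The main obstacle is the degenerate case $n=1$, where the inequality $2\pi/(n+1)>\pi/n$ fails and the overshoot argument breaks down: here $m$'s pre-event arc was the whole circle, $A_{k+1}$ is exactly a semicircle, and it is no longer automatic that some resident agent pokes into it with positive length. I would handle this case separately, arguing from the fact that an agent with $n=1$ has never interacted, so the remaining $k-1\ge 1$ agents must themselves carry arcs (of full or positive width) that intersect the newcomer's semicircle, or, alternatively, by restricting the ``moreover'' statement to the generic regime $n\ge 2$ in which the clean overshoot argument applies. Establishing the overlap rigorously in the $n=1$ corner is the one place where more than routine bookkeeping is required.
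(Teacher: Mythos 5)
Your coverage argument is essentially the paper's own proof: the two post-event arcs of length $2\pi/(n+1)$ are hinged at the old centroid $\theta_m^c$, their union is a superset of the old arc of $m$ precisely because $2\pi/(n+1)\ge\pi/n$, and no other agent's arc changes, so the union over all agents still equals $S^1$. Where you add value is the \emph{moreover} clause: the paper's proof stops after the coverage argument and never explicitly establishes the overlap with a resident agent, whereas your overshoot argument (the newcomer's clockwise endpoint $\theta_m^c\oplus 2\pi/(n+1)$ strictly exceeds $m$'s former endpoint $\theta_m^c\oplus\pi/n$ when $n>1$, so the newcomer protrudes into territory that pre-event coverage forces some unchanged resident to own) supplies it cleanly. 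Your flagged difficulty at $n=1$ is genuine and is also latent in the paper, whose strict inequality $4\pi/(n_i+1)>2\pi/n_i$ likewise holds only for $n_i>1$; in that corner the newcomer's arc is exactly a semicircle and nothing forces another resident's arc to meet it on a set of positive length, so the overlap claim really does need either the restriction $n\ge 2$ or an appeal to the paper's sequential-introduction setting (Algorithm~\ref{alg:main} guarantees $n_i\ge 2$ for every resident after the second agent enters). Your instinct to isolate and restrict that case is the right resolution, not a defect of your proof.
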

\begin{proof} When the $(k+1)^{\rm th}$ is introduced, let
$i$ denote the index of the agent with whom it interacts.
These two agents update their knowledge of the number of 
agents in $Q$ to $n_i+1$ as per Algorithm~\ref{alg:agent added}, where $n_i$ is the number of agents known to the agent $i$ before interacting with agent $k+1$. The two agents also assign themselves disjoint partitions of size $S_i = S_{k+1} = 2\pi/(n_i+1)$ each, which yields a joint coverage of size $4\pi/(n_i+1)$. We 
note that 
\begin{equation}
4\pi/(n_i+1) > 2\pi/n_i~{\rm for}~n_i > 1
\label{eq:temp0}
\end{equation}
Algorithm~\ref{alg:agent added} ensures that the 
areas of dominance of agents $i$ and $k+1$ are symmetric 
about the centroid of the previous area of dominance (labeled
temporarily as $S_i^{old}$) of agent $i$. From \eqref{eq:temp0}, it follows that the combined area of dominance of agents $i$ and $k+1$ is a superset of $S_i^{old}$. Since $Q$ was covered fully 
by the $k$ agents before the entry of agent $k+1$, agents other
than $i$ and $k+1$ cover the area outside $S_i^{old}$. This completes the proof. \end{proof}

Lemma~\ref{prop:1} implies that (i) the domain $Q$ is fully covered, with redundant coverage in some areas, at each step of agent addition, and (ii) the partition of $Q$ is not an equipartition at the end of agent addition. Since the agents would ideally aim for an equipartition, that would require further interaction between the agents.

\subsection{Sequential interaction of lazy agents}\label{sec:main2}
In this section, we construct the sequence of interactions between agents. The sequence
has two components. First, the agents are introduced
sequentially as described in Algorithm~\ref{alg:main}. 
The sequence continues with inter-agent interaction chosen 
from one of Algorithm~\ref{alg:repartition} and Algorithm~\ref{alg:repartition1}.

\subsubsection{Addition of agents}
In the first step (Algorithm~\ref{alg:main}), agents are
introduced sequentially with further prescription that agent $j > 1$ interacts only with agent $j-1$ upon entering the domain. 
This continues until the last agent $N$ is introduced.

\begin{algorithm}[htb]
\caption{Special case: sequential addition and interaction}
\label{alg:main}
\begin{algorithmic}
\STATE Initialize: domain $Q = S^1$; number of agents $n > 2$
\STATE Initialize: ${\rm agents\_added} = 2$; $t = 2$; $n_1 = n_2 = 2$
\STATE Initialize: $\theta_1^c = 0$, $\theta_2^c = \pi$
\WHILE{${\rm agents\_added} < n$}
\STATE Update time $t \leftarrow t + 1$
\STATE Introduce agent $t$; $agents\_added \leftarrow agents\_added + 1$
\STATE Agent $t$ interacts with agent $t-1$ per Algorithm~\ref{alg:agent added}
\STATE Update $n_{t-1}$, $n_t$, $\theta^c_{t-1}$, $\theta^c_t$, and $n_{t-1} = n_t = t$
\ENDWHILE
\end{algorithmic}
\end{algorithm}

We derive analytical expressions for $\theta_i^c[n]$; i.e., the centroid locations
after all $N$ agents have been introduced in the system.

\begin{lemma}\label{lem:thetaR1}
Suppose that $N > 2$ agents are added to the domain $Q = S^1$ as per Algorithm~\ref{alg:main}. Then, after all $N$ agents have entered $Q$ and pending any further interactions between agents, the positions of the $N$ agents are given by
\begin{eqnarray}
\nonumber & & \theta^c_1[N] = 0,~\theta_2[t] = \frac{2\pi}{3} \\
& & \theta^c_p[N] = \pi \oplus \sum_{m=3}^p \frac{\pi}{m} \ominus \frac{\pi}{p + 1},~p\in[3,\,N-1] \\
& & \theta^c_n[N] = \pi \oplus \sum_{m=3}^N \frac{\pi}{m}
\end{eqnarray}
\end{lemma}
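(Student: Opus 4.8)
The plan is to track the centroid positions $\theta^c_p$ through the induction built into Algorithm~\ref{alg:main}, peeling off one agent entry at a time and using the update rules in Algorithm~\ref{alg:agent added}. The key observation is that when agent $t$ enters and interacts with agent $t-1$, the new boundary $\theta^l_t = \theta^u_{t-1}$ is pinned at the \emph{old} centroid $\theta^c_{t-1}[t-1]$; agent $t$ then claims an arc of length $2\pi/t$ sitting clockwise of that pin, so that immediately after entry
\[
\theta^c_t[t] = \theta^c_{t-1}[t-1] \oplus \frac{\pi}{t}, \qquad \theta^c_{t-1}[t] = \theta^c_{t-1}[t-1] \ominus \frac{\pi}{t}.
\]
First I would verify these two recursions directly from the \texttt{mean}/$\oplus$/$\ominus$ definitions, treating $\theta^c_{t-1}[t-1]$ as the shared pin and noting that $\theta^c_t$ is the centroid of an arc of half-length $\pi/t$ starting at the pin and going clockwise, while $\theta^c_{t-1}$ is the centroid of the equal arc going anticlockwise.

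The crux is then to understand how a given agent $p$'s centroid evolves \emph{after} it has entered but while later agents keep arriving. The important structural fact is that under Algorithm~\ref{alg:main} each new agent $t$ interacts only with its immediate predecessor $t-1$, so once agent $p$ has been ``pushed'' by the entry of agent $p+1$, it is never touched again by any subsequent entry. Consequently $\theta^c_p[N] = \theta^c_p[p+1]$ for every $p \le N-1$, and $\theta^c_N[N] = \theta^c_N[N]$ trivially. Thus the only displacement an agent $p$ (for $p \ge 3$) ever suffers is the single anticlockwise nudge $\ominus \pi/(p+1)$ that occurs when agent $p+1$ enters and pins to $\theta^c_p[p]$. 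I would therefore establish two facts by induction: (a) the ``entry centroid'' of agent $p$ satisfies $\theta^c_p[p] = \pi \oplus \sum_{m=3}^{p} \pi/m$ for $p \ge 3$ (with the base case $\theta^c_3[3] = \pi \oplus \pi/3$ obtained from $\theta^c_2[2] = \pi$ via the first recursion), and (b) the final centroid is this entry position shifted anticlockwise by $\pi/(p+1)$, giving the stated formula $\theta^c_p[N] = \pi \oplus \sum_{m=3}^p \pi/m \ominus \pi/(p+1)$ for $p \in [3, N-1]$, while agent $N$ receives no such nudge and retains $\theta^c_N[N] = \pi \oplus \sum_{m=3}^N \pi/m$.

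For the low-index base cases I would handle $\theta^c_1$ and $\theta^c_2$ separately, since they are seeded at $0$ and $\pi$ with $n_1=n_2=2$. Agent $1$ is only ever the predecessor of agent $2$ (which enters at initialization), and by construction no agent after that pins to agent $1$, so $\theta^c_1[N]=0$ is immediate; agent $2$ is pushed exactly once, by the entry of agent $3$, and the recursion gives $\theta^c_2[N] = \pi \ominus \pi/3 = 2\pi/3$ as claimed. The main obstacle I anticipate is bookkeeping on the circle: the $\oplus$ and $\ominus$ operators and the piecewise $\mean$ and $d$ definitions are only well-behaved so long as the relevant arcs stay shorter than $\pi$, and one must check that the cumulative rotation $\pi \oplus \sum_{m=3}^p \pi/m$ together with the small nudges never forces a wrap-around that would flip the branch in \eqref{eq:cg} or \eqref{eq:arcl}. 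I would discharge this by noting that the partial sums $\sum_{m=3}^p \pi/m$ grow only logarithmically and that each individual interaction involves arcs of length $2\pi/t < \pi$ for $t \ge 3$, so the local $\mean$/$\ominus$ computations at each step remain in the ``short arc'' regime even though the total displacement from $\theta_1^c = 0$ may itself exceed $\pi$; the formulas are to be read as compositions of legitimate clockwise rotations via $\oplus$, which is exactly what Definition~\ref{def:oplus} licenses.
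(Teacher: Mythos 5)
Your proposal is correct and follows essentially the same route as the paper: an induction over the sequential entries, with each new agent $t$ splitting its predecessor's arc about the old centroid $\theta^c_{t-1}[t-1]$ so that the entering agent lands at $\theta^c_{t-1}[t-1]\oplus\pi/t$ and the predecessor retreats to $\theta^c_{t-1}[t-1]\ominus\pi/t$. You are in fact somewhat more explicit than the paper's proof, which compresses the argument into ``this process can be repeated by induction'': you isolate the structural fact that each agent is displaced exactly once after entry (so $\theta^c_p[N]=\theta^c_p[p+1]$) and you flag the wrap-around bookkeeping on $S^1$, which the paper defers to Remark~\ref{rem:1}.
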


\begin{proof}
We prove this result by considering a process wherein, 
at each time instant $t > 0$ , an agent is added to the domain $Q = S^1$. Clearly, at time $t = j$, 
the $j^{\rm th}$ agent gets added and it communicates only with agent $j-1$. The domain is partitioned 
as per Algorithm~\ref{alg:agent added} and the agents move to their individual centroids. This 
process continues until all agents $N$ are added to the system (i.e., until $t = N$).

At $t = 2$, when agent $2$ enters $Q$, agents $1$ and $2$ move to locations which are diametrically
apart; without loss of generality, we write $\theta_{1}^c[2] = 0$ and $\theta_{2}^c[2] = \pi$. 
Their mutual areas of coverage do not overlap. 

At time $t = 3$, agent $3$ enters $Q$ and communicates with agent $2$; agent $1$ does not move.
Agents $2$ and $3$ assign themselves domains of size $2\pi/3$ each; the common boundary of this domain is at the location of agent $2$'s centroid at time $t=1$. The new centroid locations are thus given by
\begin{eqnarray*}
\theta_2^c[3] = \pi - \frac{\pi}{3} = \frac{2\pi}{3},~~\theta_3^c[3] = \pi + \frac{\pi}{3} = \frac{4\pi}{3}
\end{eqnarray*}
This process can be repeated to yield, by induction,
that the centroid locations after adding the $j^{\rm th}$ agent at time $3 \leq j < n$ are given by
\begin{eqnarray}
\theta^c_k[j] &=& \pi \oplus \sum_{m=3}^k\frac{\pi}{m} \ominus \frac{\pi}{k+1},~~3\leq k \leq j-1 \nonumber \\
\theta_k^c[j] &=& \pi \oplus \sum_{m=3}^j\frac{\pi}{m},~k=j
\label{eq:centroid0}
\end{eqnarray}
Setting $j = N$ completes the proof. 
\end{proof}

\subsubsection{Sequential interaction between the agents 
in $Q$} Once all $N$ agents have been introduced as per 
Algorithm~\ref{alg:main}, we define a sequence of events labeled by $\mathcal{T} = \{k\}$, $k \in \mathbb{N}$ and $k \leq N-2$, such that at instant $k$, the agent $N-k$ interacts with agent $N-k-1$ if $S_{N-k, N-k-1} > 0$  or if $\theta^u_{N-k-1} = \theta^l_{N-k}$
or $\theta^l_{N-k-1} = \theta^u_{N-k}$ (i.e., the borders overlap at one end, which we denote compactly as $\theta^{u,l}_{N-k-1} = \theta^{l,u}_{N-k}$).
Else, the sequence is halted. This sequence is enumerated formally in Algorithm~\ref{alg:repartition}. 

\begin{algorithm}[htb]
\caption{Pairwise interaction algorithm - 1}
\label{alg:repartition}
\begin{algorithmic}
\REQUIRE $N$ agents introduced as per Algorithm~\ref{alg:main}
\STATE $k = 1$, $run\_flag = 1$
\WHILE {$run\_{flag} = 1$ and $k \leq N-2$}
\IF {$S_{N-k,N-k-1} > 0$ or $\theta^{u,l}_{N-k-1} = \theta^{l,u}_{N-k}$}
\STATE Update knowledge of $(N-k-1)^{\rm th}$ agent
\STATE $\theta_{N-k-1}^c = \theta_{N-k}^c \ominus 2\pi/N$
\ELSE
\STATE $run\_{flag} = 0$; halt sequence
\STATE Check for coverage
\ENDIF
\STATE $k \leftarrow k + 1$
\ENDWHILE
\end{algorithmic}
\end{algorithm}

\begin{remark}
\label{rem:1}
We recall that $\sum_m (\pi/m)$ is unbounded on $\mathbb{R}$.
$$
1 + \sum_{m=3}^p \frac{1}{m} \approx \begin{cases}
1.95 & p = 6 \\
2.09 & p = 7 \\
3.999 & p = 50 
\end{cases}
$$
Thus, on $S^1$, after Algorithm~\ref{alg:main}
is implemented, $j > k$ does not imply that $\theta_j > \theta_k$  (notice the use of 
$>$ rather than $\succ$). Informally, if the agents could be viewed as connected by a 
material thread, this thread could circle $S^1$ multiple times depending on $N$ (at least twice for $N > 50$).
\end{remark}

Remark~\ref{rem:1} portends significant complication of our calculations for large values of $N$. However, it is possible to show that coverage is lost even for a relatively small number of 
agents. The next theorem shows that the events in Algorithm~\ref{alg:repartition} do {\em not} lead to an instantaneous loss of coverage if the number of agents is
less than $7$. It also shows that coverage can be lost for $N=7$ agents. We generalise
this result later. 

\begin{remark}\label{rem:2}
If the number of agents is capped at $N \leq 7$, it is easy to show that $\theta^c_j[N] > \theta^c_k[N]$ if $j > k$ for $j,\,k \leq N-1$. Thus, as long as $\theta^c_{N-1}$ does
not shift, we can use the usual algebraic operators $+$ and $-$ in place of $\oplus$
and $\ominus$ for calculating the centroid locations resulting from Algorithm~\ref{alg:repartition}. 
\end{remark}

\begin{theorem}\label{thm:main1}
Let the number of agents be bounded by $N \leq 7$. 
Suppose that the agents are introduced following Algorithm~\ref{alg:main} and they then interact as per 
Algorithm~\ref{alg:repartition}. Then, Algorithm~\ref{alg:repartition} terminates with
loss of instantaneous coverage if and only if $N=7$. Moreover, for $N \leq 6$, 
Algorithm~\ref{alg:repartition} terminates with an equipartition of $Q$.
\end{theorem}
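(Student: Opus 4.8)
The plan is to track the explicit agent positions through Algorithm~\ref{alg:repartition}, starting from the configuration supplied by Lemma~\ref{lem:thetaR1}, and to reduce the whole question to a single chain of overlap inequalities. By Remark~\ref{rem:2}, for $N \le 7$ only agent $N$ wraps past agent $1$, so I may use ordinary $+$ and $-$ for the centroids of agents $1,\dots,N-1$. The key structural observation is that at step $k$ the interaction sets $\theta^c_{N-k-1}=\theta^c_{N-k}\ominus 2\pi/N$ and, since the knowledge of agent $N-k-1$ is updated to $N$, laziness gives it an arc of length exactly $2\pi/N$; hence the relocated agent sits immediately anticlockwise of agent $N-k$ with coincident borders. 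Consequently, \emph{if the sequence runs to completion}, agents $1,\dots,N-1$ end up equally spaced by $2\pi/N$ with abutting arcs, located at $\theta^c_{N-1}-j\,(2\pi/N)$ for $j=0,\dots,N-2$; combined with $\theta^c_N=\theta^c_{N-1}+2\pi/N$ (already true from Lemma~\ref{lem:thetaR1}), the $N$ centroids are $\theta^c_{N-1}+i\,(2\pi/N)$ over $N$ consecutive integers $i$, which modulo $N$ are all residues. Thus all $N$ arcs of length $2\pi/N$ tile $S^1$ — an equipartition. So the theorem turns entirely on whether the sequence halts prematurely.

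Next I would compute, for the step relocating agent $p$ (so $N-k-1=p$ with already-relocated successor $p+1$), the lower border of agent $p+1$ and compare it with the pre-interaction upper border of agent $p$. A convenient simplification is that for $p\ge 2$ the half-arc cancels, making agent $p$'s original upper border simply $\pi\bigl(1+\sum_{m=3}^{p}1/m\bigr)$, while the relocated agent $p+1$ has lower border $\pi\bigl(1+\sum_{m=3}^{N-1}1/m-(2N-2p-2)/N\bigr)$. By Lemma~\ref{lem:sij}, the interaction condition $S_{p+1,p}>0$ (or coincident borders) then reduces to
\[
\sum_{m=p+1}^{N-1}\frac1m \ \le\ \frac{2(N-p-1)}{N}, \qquad 2\le p\le N-2.
\]
I would verify this for every such $p$ whenever $N\le 7$: bounding each of the $N-p-1$ summands by $1/(p+1)$ already settles all $p\ge N/2-1$, and the remaining cases are a short finite check. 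Hence no intermediate step halts the sequence.

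The binding constraint is the final step, relocating agent $1$, which is exceptional: it never updates its knowledge in Algorithm~\ref{alg:main}, so it keeps $n_1=2$ and the wide arc $[-\pi/2,\pi/2]$ with fixed upper border $\pi/2$. Coverage is retained precisely when the relocated agent $2$ still reaches it, i.e. its lower border $\pi\bigl(1+\sum_{m=3}^{N-1}1/m-(2N-4)/N\bigr)$ does not exceed $\pi/2$, which simplifies to
\[
\sum_{m=3}^{N-1}\frac1m \ \le\ \frac32-\frac4N .
\]
For $N\le 6$ this holds (e.g. $N=6$ gives $47/60\le 50/60$), so agent $1$ moves, the chain completes, and the configuration is the equipartition described above. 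For $N=7$ it fails, since $\tfrac13+\tfrac14+\tfrac15+\tfrac16=\tfrac{19}{20}>\tfrac{13}{14}=\tfrac32-\tfrac47$; the sequence then halts before agent $1$ moves, leaving an uncovered arc of positive length $3\pi/140$ strictly between agent $1$'s upper border $\pi/2$ and agent $2$'s lower border — an instantaneous loss of coverage.

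The main obstacle is not the final inequality (a direct evaluation) but showing the sequence does not halt at an \emph{earlier} step for any $N\le 7$; that is, verifying the entire chain of intermediate overlap inequalities and confirming each relocated agent still overlaps the next unrelocated one. The delicate points are (i) phrasing this as a uniform monotone bound rather than case-by-case arithmetic, and (ii) keeping the border bookkeeping honest given that agent $N$ wraps around $S^1$ (Remark~\ref{rem:1}), which is exactly why the argument is confined to the regime $N\le 7$ where Remark~\ref{rem:2} applies.
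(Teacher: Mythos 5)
Your proposal is correct and follows essentially the same route as the paper: anchor the positions at $\theta^c_{N-1}$, track the chain of relocations by $2\pi/N$, and reduce termination at each step to the overlap condition of Lemma~\ref{lem:sij} between the relocated agent and its not-yet-relocated anticlockwise neighbour (your inequality $\sum_{m=p+1}^{N-1}1/m \le 2(N-p-1)/N$ and the special case $\sum_{m=3}^{N-1}1/m \le 3/2 - 4/N$ for agent $1$ are exactly the paper's condition \eqref{eq:nsc1} rewritten via the border cancellation). You merely make explicit the arithmetic that the paper dismisses as ``readily shown,'' including the uniform bound settling the intermediate steps and the $3\pi/140$ gap for $N=7$.
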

\begin{proof}
The case for $N = 1$ and $N=2$ is trivial and we omit it
for brevity. For $N=3$, at the end of Algorithm~\ref{alg:main},
$\theta^c_{\{1,2,3\}} = \{0,\,2\pi/3,\,4\pi/3\}$; $n_{\{1,2,3\}} = \{2,\,3,\,3\}$. Clearly, $C^l_2 = \pi/2 + \pi/3 - 2\pi/3 = \pi/6$ and $\theta^l_2 = \pi/3$. When agents $2$ and $1$ interact based on Algorithm~\ref{alg:repartition}, we get that $\theta_1^c = 0$ and $S_1 = 2\pi/3$. Clearly, the domain $Q$ is equipartitioned.

For the case $4 \leq N \leq 7$, if Algorithm~\ref{alg:repartition} terminates only after agents $2$ and $1$ have interacted, 
then it follows that all agents have learned about each other; their
areas of responsibility thus satisfy $S_i = 2\pi/N$ for all $i$ and 
that their areas of responsibility do not overlap. Thus, 
if Algorithm~\ref{alg:repartition} does not terminate before agents $2$ and $1$ have interacted, then it follows that $Q$ is 
equipartitioned.

At the end of Algorithm~\ref{alg:main}, the agents $N$ and $N-1$ only share a boundary and their partitions are minimally (lazily) sized at $2\pi/N$. Also, let $\alpha = \theta_{N-1}$; this will be a useful anchor for the subsequent calculations, as 
explained in Remark~\ref{rem:2}. At time $k = 1$, the agent $N-1$ interacts with agent $N-2$; this results in the agent $N-2$ learning about all $N$ agents and assigning itself an arc of responsibility
of size $2\pi/N$. Thus,
$$
\theta_{N-2}^c \leftarrow \alpha - \frac{2\pi}{N}
$$
The areas of responsibility of agents $N-1$ and $N-2$ only share a boundary located at $\alpha - \pi/N$. Suppose that this
process continues until time step $p \in \mathcal{T}$, $p\leq N-3$. Then, at the end of time step $p$, we get
$$
\theta^c_{N-p-1} \leftarrow \alpha - \frac{2\pi}{N}p,~S_{N-p-1} = \frac{2\pi}{N}
$$
At this point, it is worth noting that
$$
\theta^c_{N-p-2} = \begin{cases} \pi + \sum_{m=3}^{N-p-2}\frac{\pi}{m} - \frac{\pi}{N-p-1}, & p < N-3\\
0, & p = N-3
\end{cases}
$$
where we have prescribed that $\sum_{3}^2 (\cdot) = 0$ with slight abuse of notation.
If coverage is to be lost at this point, the following necessary and sufficient condition follows from Lemma~\ref{lem:sij}:
\begin{equation}
d(\theta^c_{N-p-1},\, \theta^c_{N-p-2}) > \frac{\pi}{N} + \frac{\pi}{N-p-1},~p \leq N-2
\label{eq:nsc1}
\end{equation}
It can be shown readily that Eq.~\eqref{eq:nsc1} is not 
satisfied for any permissible $p$ (i.e., $p \leq N-3$)
when $N \leq 6$. Thus, at $p = N-2$, agents $2$ and $1$
can interact as per Algorithm~\ref{alg:repartition} to yield
an equipartition of $Q$. On the other hand, when $N=7$,
\eqref{eq:nsc1} is satisfied for $p = 4$. Thus, there is
no overlap between the areas of responsibility of 
agents $1$ and $2$. It follows readily that there is an
instantaneous loss of coverage in $Q$ when $N=7$. This 
completes the proof.
\end{proof}

The loss of coverage for $N=7$ is shown in 
Fig.~\ref{fig:7 agent proof} which shows the status of coverage at each
step of Algorithms \ref{alg:main} and \ref{alg:repartition}. 

\begin{figure*}[htb]
\centering
        \includegraphics[width=.2\textwidth,trim={0cm 0cm 0cm 0cm}]{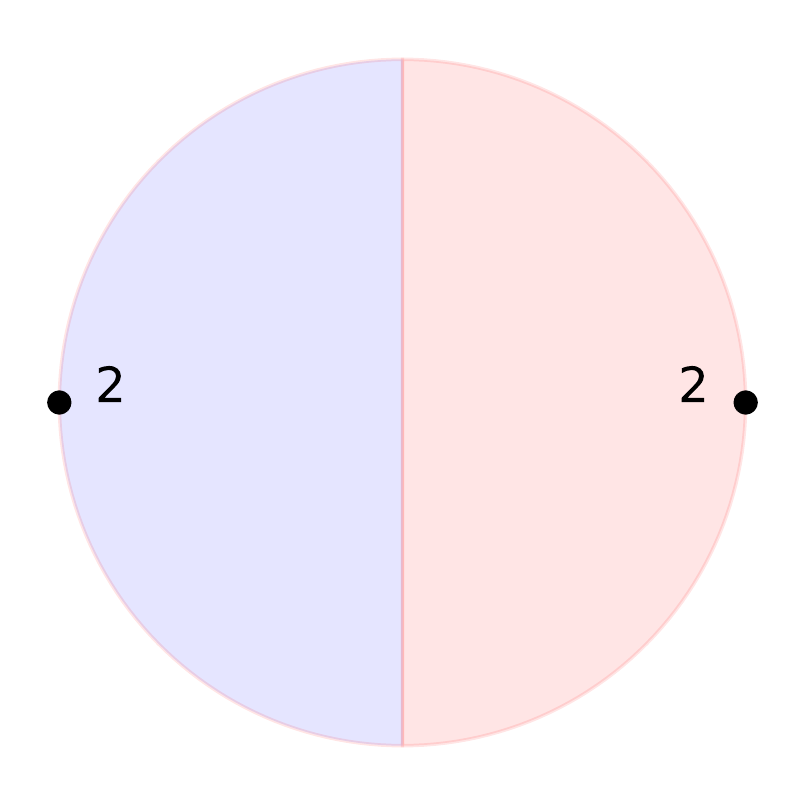}\hfill
        \includegraphics[width=.2\textwidth,trim={0cm 0cm 0cm 0cm}]{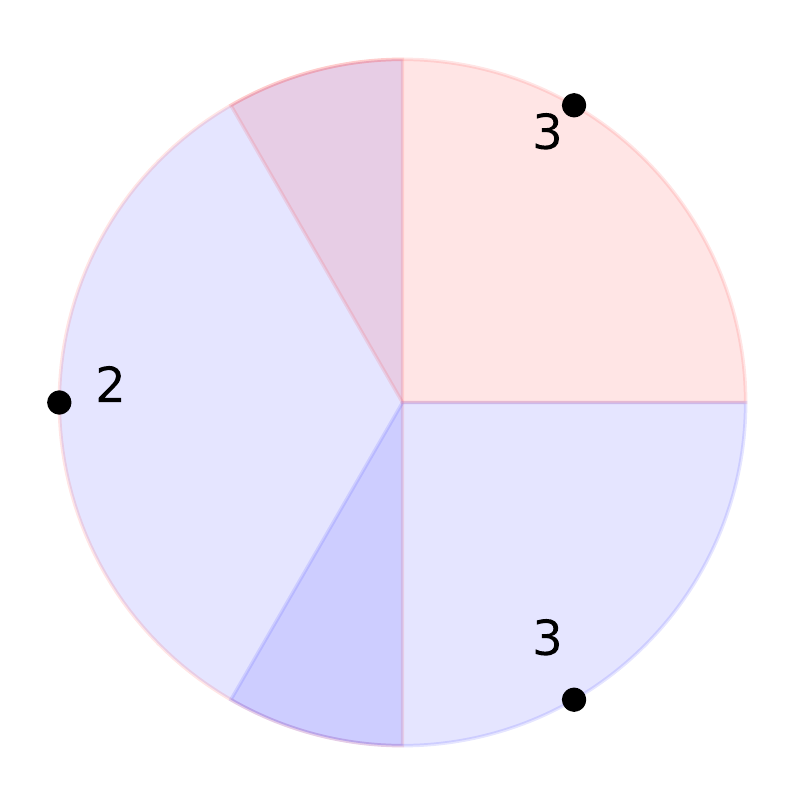}\hfill
        \includegraphics[width=.2\textwidth,trim={0cm 0cm 0cm 0cm}]{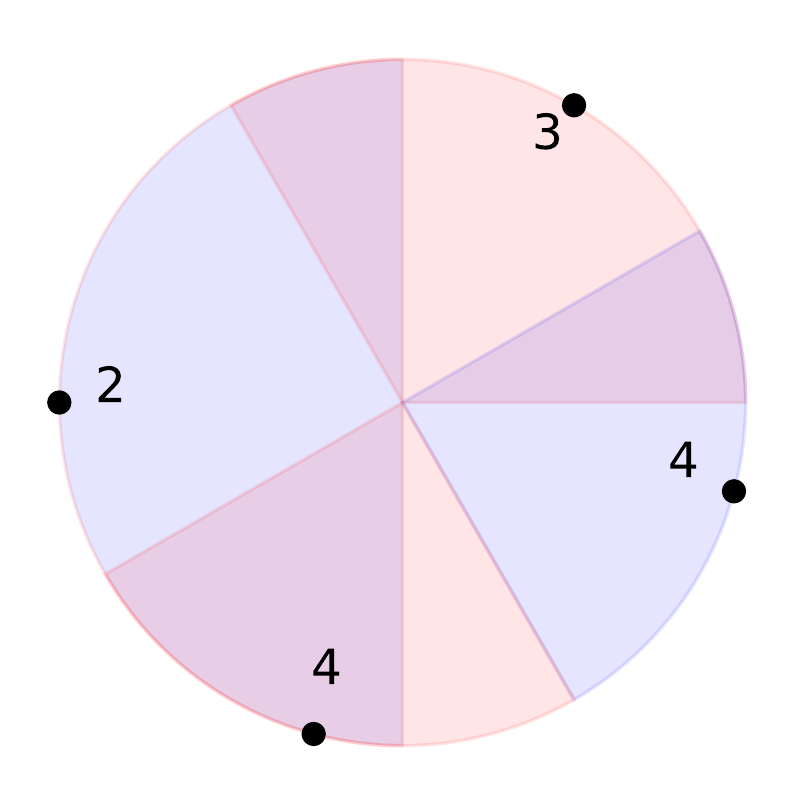}\hfill
        \includegraphics[width=.2\textwidth,trim={0cm 0cm 0cm 0cm}]{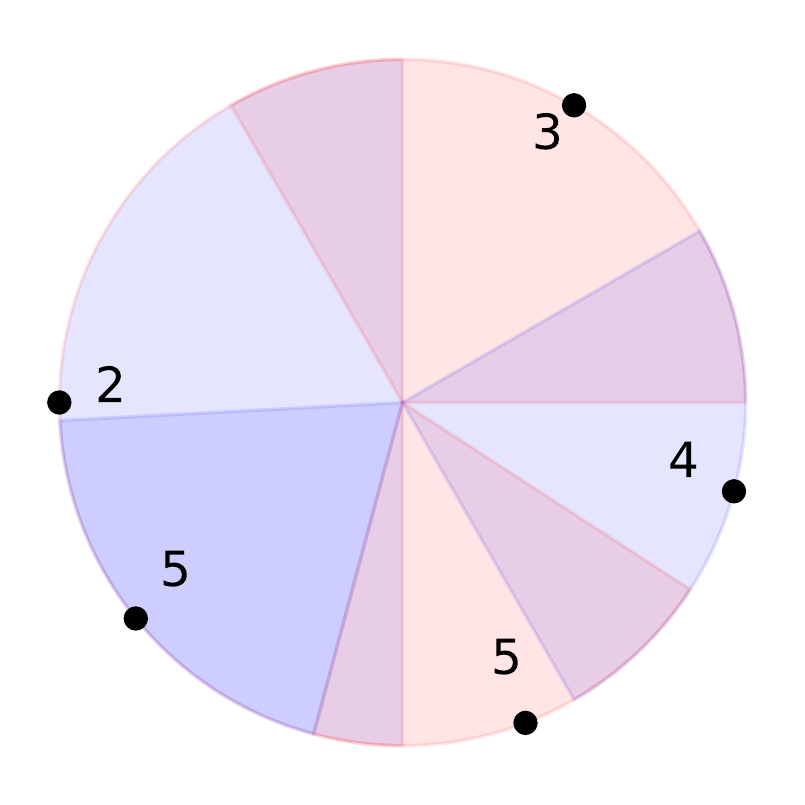}\hfill
        \includegraphics[width=.2\textwidth,trim={0cm 0cm 0cm 0cm}]{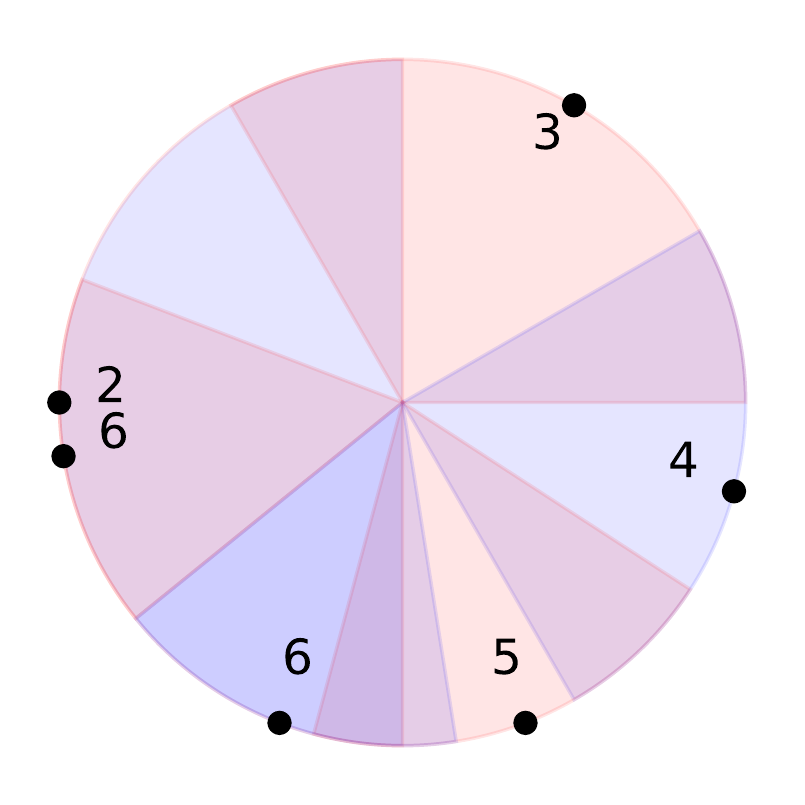} \hfill
        \includegraphics[width=.2\textwidth,trim={0cm 0cm 0cm 0cm}]{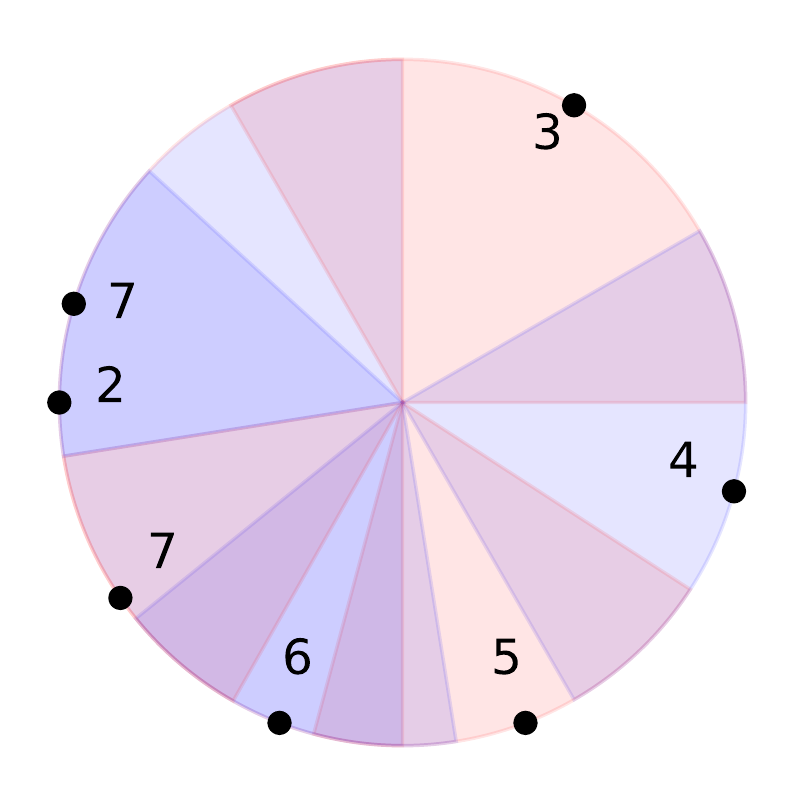}\hfill
        \includegraphics[width=.2\textwidth,trim={0cm 0cm 0cm 0cm}]{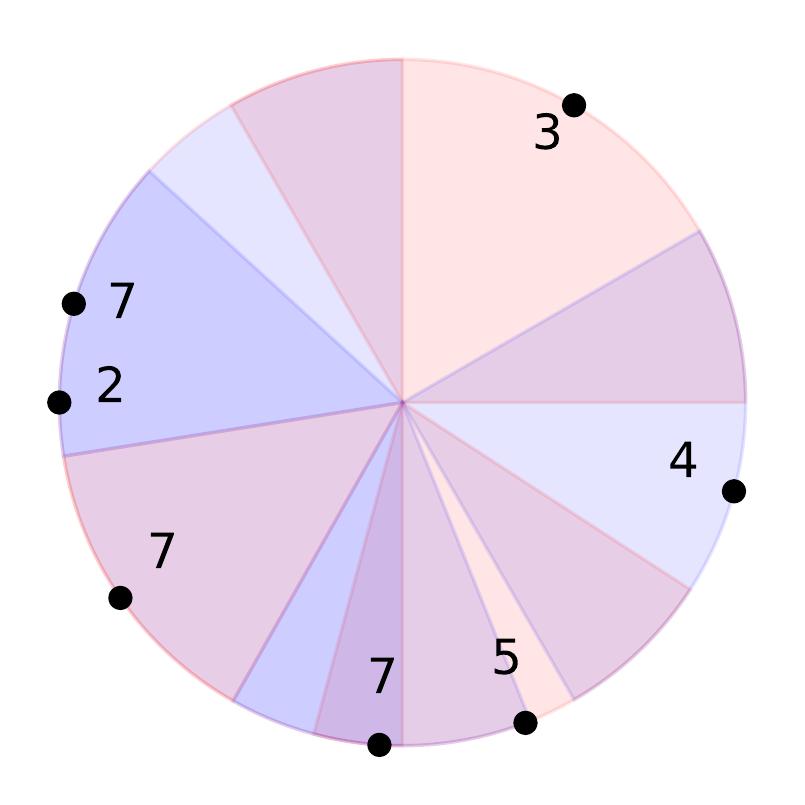}\hfill
        \includegraphics[width=.2\textwidth,trim={0cm 0cm 0cm 0cm}]{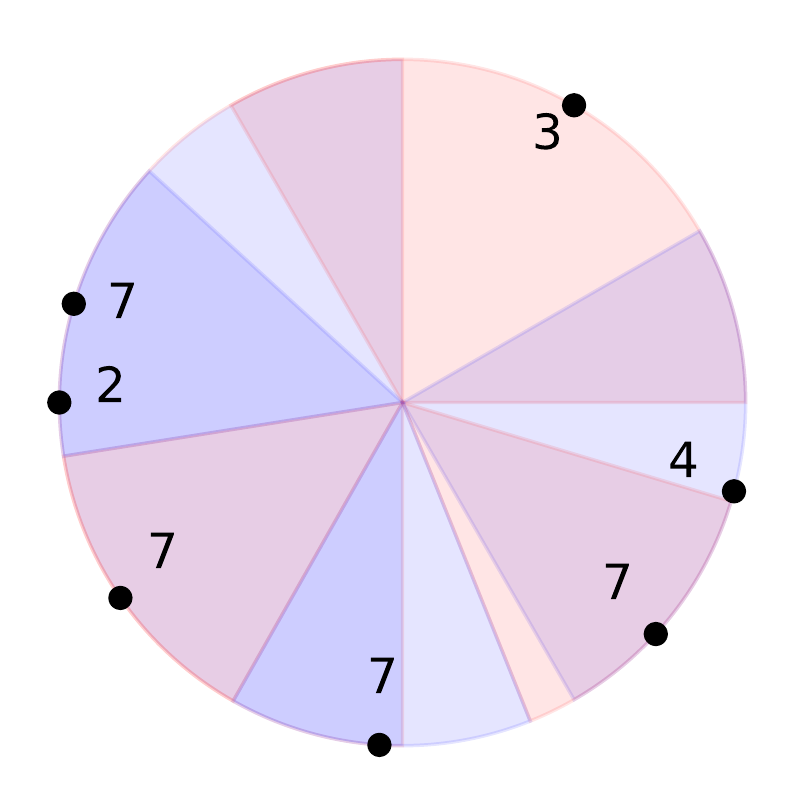}\hfill
        \includegraphics[width=.2\textwidth,trim={0cm 0cm 0cm 0cm}]{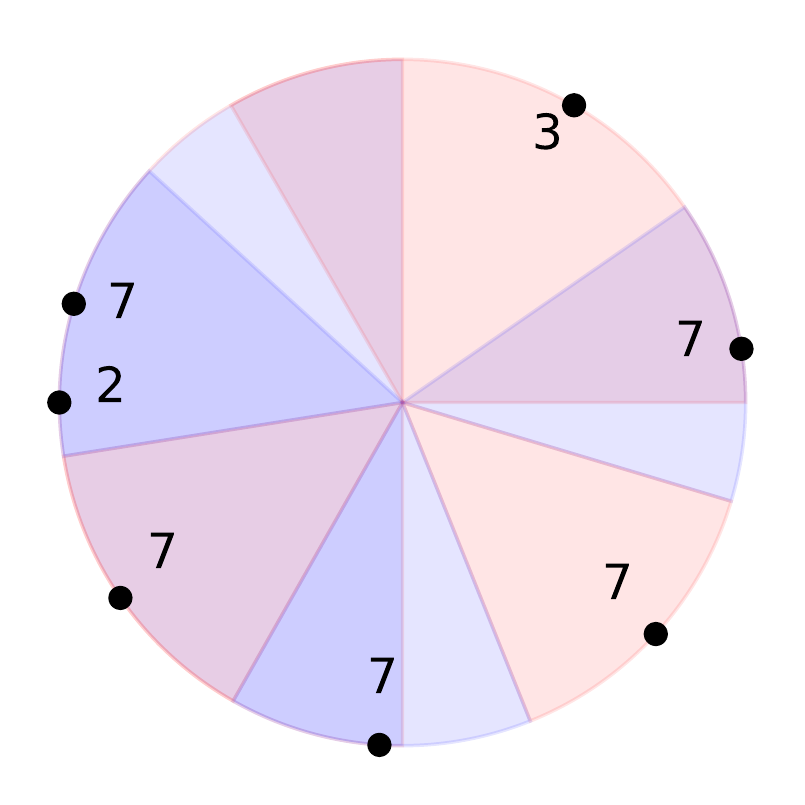}\hfill
        \includegraphics[width=.2\textwidth,trim={0cm 0cm 0cm 0cm}]{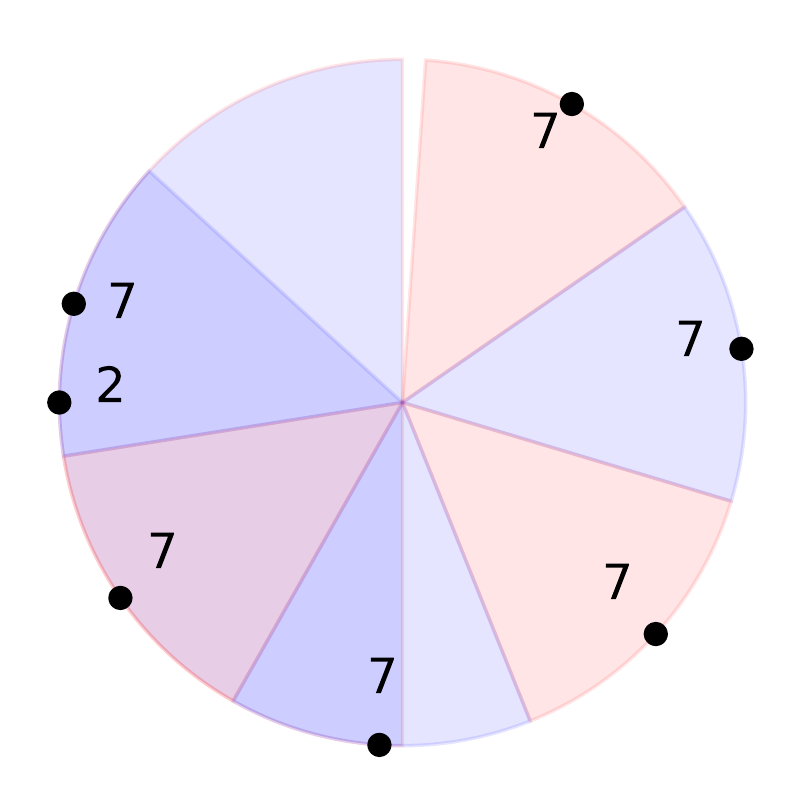}\hfill
    \centering
    \caption{Simulation of 7 agents added sequentially to $S^1$. The first $6$ images show the addition of agent $2$ - $7$. Images $7$ - $10$ show how repartitioning occurs in an anticlockwise fashion, and where failure occurs between two agents (the white region in the last plot of the second row).}
    \label{fig:7 agent proof}
\end{figure*} 

While Algorithm~\ref{alg:repartition} ensures that coverage is not
lost for $N \geq 6$, it is possible to find an alternate sequence
of events which lead to loss of coverage for $N = 5$. The sequence
of events is enumerated in Algorithm~\ref{alg:repartition1}. 
As with the previous sequence, the present sequence starts at
the of end of Algorithm~\ref{alg:main}. Thereafter, agent $N$
interacts with agent $1$. Formally, we define a sequence of events labelled by $\mathcal{T} = \{k\}$, $k \in \mathbb{N}$ and $k \leq N-2$, such that 
at instant $1$, the agent $N$ interacts with agent $1$ if $S_{N,1} > 0$; and for all subsequent $k$, agent $k-1$ interacts with agent $k$ if $S_{k-1,k} > 0$  or if $\theta^{u,l}_{k-1} = \theta^{l,u}_{k}$ (i.e., the borders overlap). Else, the sequence is halted. This sequence is enumerated formally in Algorithm~\ref{alg:repartition1}. 

\begin{algorithm}
\caption{Pairwise interaction algorithm - 2}
\label{alg:repartition1}
\begin{algorithmic}
\REQUIRE $N$ agents introduced as per Algorithm~\ref{alg:main}
\STATE $k = 1$, $run\_flag = 1$
\STATE Prescribe agent $k-1$ for $k = 1$ is agent $N$
\WHILE {$run\_{flag} = 1$ and $k \leq N-2$}
\IF {$S_{k-1,k} > 0$ or $\theta^{l,u}_{k} = \theta^{u,l}_{k-1}$}
\STATE Update knowledge of $k^{\rm th}$ agent
\STATE $\theta_{k}^c = \theta_{k-1}^c \oplus 2\pi/N$
\ELSE
\STATE $run\_{flag} = 0$; halt sequence
\STATE Check for coverage
\ENDIF
\STATE $k \leftarrow k + 1$
\ENDWHILE
\end{algorithmic}
\end{algorithm}

\begin{figure}[t]
\centering
\includegraphics[width=.3\textwidth,trim={0cm 0cm 0cm 0cm},clip]{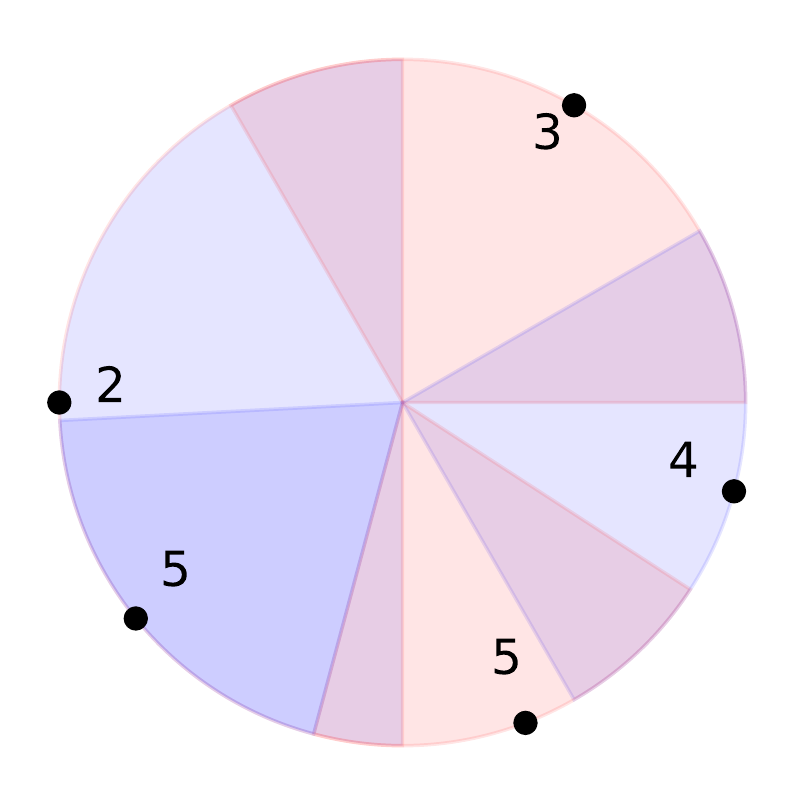}
\includegraphics[width=.3\textwidth,trim={0cm 0cm 0cm 0cm},clip]{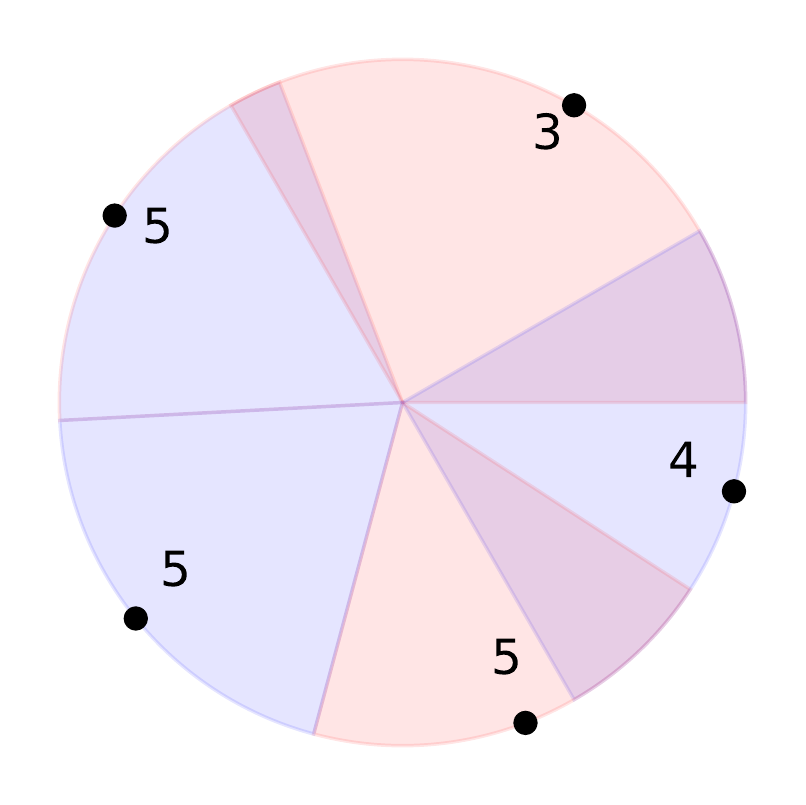}
\includegraphics[width=.3\textwidth,trim={0cm 0cm 0cm 0cm},clip]{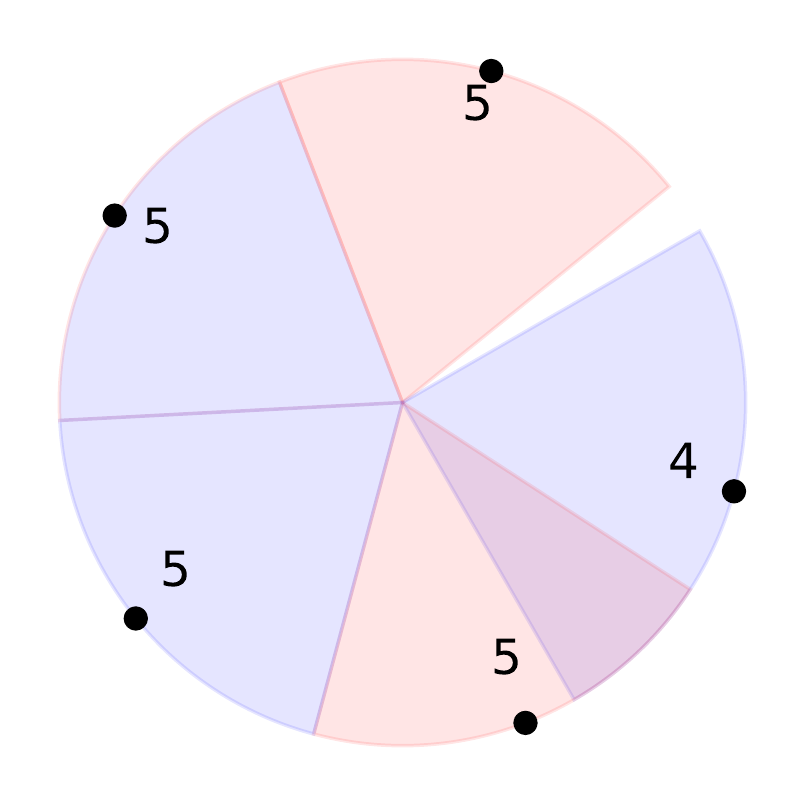}\hfill
\caption{Simulation of $5$ agents repartitioning following Algorithm~\ref{alg:repartition1}. After $2$ interactions, coverage is lost between the $2^{nd}$ and $3^{rd}$ agent. The size of the uncovered region $9^{\circ}$ or 
$0.157~{\rm rad}$.}
        \label{fig:5 agent fail}
\end{figure}

\begin{theorem}\label{thm:main2}
Let the number of agents be bounded by $N \leq 5$. 
Suppose that the agents are introduced following Algorithm~\ref{alg:main} and they then interact as per 
Algorithm~\ref{alg:repartition1}. Then, Algorithm~\ref{alg:repartition1} terminates with
loss of instantaneous coverage if and only if $N=5$. Moreover, for $N \leq 4$, 
Algorithm~\ref{alg:repartition} terminates with an equipartition of $Q$.
\end{theorem}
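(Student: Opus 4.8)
The plan is to follow the template of the proof of Theorem~\ref{thm:main1}, adapting it to the clockwise sweep of Algorithm~\ref{alg:repartition1}, which differs in one crucial respect: its very first interaction bridges the seam between agents $N$ and $1$ rather than staying inside the chain. First I would record the data left by Algorithm~\ref{alg:main}: the centroids $\theta_p^c[N]$ from Lemma~\ref{lem:thetaR1}, together with the knowledge counts $n_1 = 2$, $n_j = j+1$ for $2 \le j \le N-1$, and $n_N = N$, so that the lazy arcs are $S_j = 2\pi/n_j$. Taking agent $N$ as the fixed anchor at $\beta := \theta_N^c[N]$, the update $\theta_k^c = \theta_{k-1}^c \oplus 2\pi/N$ (reading agent $0$ as agent $N$) places agent $k$ at $\beta \oplus k\,(2\pi/N)$ with arc $2\pi/N$, whenever step $k$ does not halt. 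Since $1 + \sum_{m=3}^N 1/m < 2$ for $N \le 5$, the centroid ordering is preserved and, as in Remark~\ref{rem:2}, I may replace $\oplus,\ominus$ by ordinary $+,-$ throughout.

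Next I would show that if the sweep completes all $N-2$ steps then $Q$ is equipartitioned. The placed agents $1,\dots,N-2$ occupy the equally spaced positions $\beta + k\,(2\pi/N)$, each with arc $2\pi/N$, while the untouched agents $N-1,N$ already carry arc $2\pi/N$ because $n_{N-1}=n_N=N$. The single identity that makes everything interlock is $\theta_{N-1}^c[N] = \theta_N^c[N] \ominus 2\pi/N$, which is immediate from the closed forms in Lemma~\ref{lem:thetaR1} since $\theta_N^c - \theta_{N-1}^c = \pi/N + \pi/N$. Hence the clockwise cyclic order is agent $N$, agent $1,\dots$, agent $N-2$, agent $N-1$, with consecutive centroids exactly $2\pi/N$ apart and arcs meeting edge-to-edge; this is an equipartition. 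This disposes of $N=3$ and $N=4$ once their sweeps are shown to complete.

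The heart of the argument is the halting condition, obtained from Lemma~\ref{lem:sij} exactly as \eqref{eq:nsc1} was in Theorem~\ref{thm:main1}. Step $k$ compares agent $k-1$, now at $\beta + (k-1)(2\pi/N)$ with arc $2\pi/N$, against agent $k$ still at its Algorithm~\ref{alg:main} position with arc $2\pi/n_k$; the step halts with a gap iff
\[
d\!\left(\beta + (k-1)\tfrac{2\pi}{N},\, \theta_k^c[N]\right) > \frac{\pi}{N} + \frac{\pi}{n_k}.
\]
I would then simply evaluate this for $N \in \{3,4,5\}$. For $N=3$ (one step) and $N=4$ (two steps) the inequality is never strict; indeed for $N=4$ the $k=2$ comparison gives exactly $105^\circ = 45^\circ+60^\circ$, a shared boundary that Algorithm~\ref{alg:repartition1} treats as continuation, so both sweeps complete and the previous paragraph yields equipartitions. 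For $N=5$ the inequality holds at $k=3$: agent $2$ sits at $105^\circ$ covering $[69^\circ,141^\circ]$ while agent $3$ remains at $195^\circ$ covering $[150^\circ,240^\circ]$, and $90^\circ > 36^\circ + 45^\circ$, so the sweep halts.

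Finally I would verify that this halt is a genuine instantaneous loss of coverage rather than a handover covered by a third agent. Listing the five arcs at the moment of halting --- agent $5$ on $[285^\circ,357^\circ]$, agent $1$ on $[357^\circ,69^\circ]$, agent $2$ on $[69^\circ,141^\circ]$, agent $3$ on $[150^\circ,240^\circ]$, agent $4$ on $[213^\circ,285^\circ]$ --- shows the window $[141^\circ,150^\circ]$ is covered by none, the $9^\circ$ gap of Fig.~\ref{fig:5 agent fail}. Combined with the $N\le 4$ equipartition conclusions, this establishes that Algorithm~\ref{alg:repartition1} loses coverage iff $N=5$. The main obstacle I anticipate is bookkeeping rather than ideas: one must track centroids modulo $2\pi$ across the seam (agent $N$ near $321^\circ$ wrapping forward through agents $1,2$) and, in the failure case, exclude every other agent from the candidate gap; the bound $1+\sum_{m=3}^N 1/m < 2$ is precisely what keeps this finite and licenses the ordinary-arithmetic shortcut of Remark~\ref{rem:2}.
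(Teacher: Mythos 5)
Your proposal is correct and follows essentially the same route as the paper's proof: anchor at $\theta_N^c$, place agent $k$ at $\theta_N^c \oplus k(2\pi/N)$, derive the halting inequality $d(\theta_{k-1}^c,\theta_k^c) > \pi/N + \pi/n_k$ from Lemma~\ref{lem:sij}, and check it case by case to find failure only at $N=5$, $k=3$ (the $9^\circ$ gap between agents $2$ and $3$). You supply somewhat more detail than the paper in two places it leaves implicit --- the verification via $\theta_N^c - \theta_{N-1}^c = 2\pi/N$ that a completed sweep yields an equipartition, and the explicit check that no third agent covers the gap $[141^\circ,150^\circ]$ --- but the argument is the same.
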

\begin{proof} The result follows trivially for 
$N = 1$ and $N = 2$ (where Algorithm~\ref{alg:repartition1}
is not necessary). The proof for the case $N=3$ is similar 
to that for the corresponding case in Thm~\ref{thm:main1},
except that agents $1$ and $3$ interact instead of 
$1$ and $2$. Thus, we need to address only cases $N=4$ and $N=5$. At the end of 
Algorithm~\ref{alg:main}, we note that
$$
\theta_N^c = \pi + \sum_{m=3}^N \frac{\pi}{m} = \begin{cases} 83\pi/60 & N = 4 \\ 107\pi/60 & N = 5\end{cases}
$$
for $N \leq 5$. Moreover, the overlap between agents $N$ and $1$ ($N \in \{4,\,5\}$) can be readily shown to be positive. Thus, 
at time $k = 1$ in $\mathcal{T}$, agent $1$ moves to
\begin{equation}
\theta^c_1 \leftarrow \alpha = \theta_N^c \oplus \frac{2\pi}{N} = \theta^N_c + \frac{2\pi}{N} - 2\pi
\end{equation}
We have introduced $\alpha$, as in the proof of Thm~\ref{thm:main1}, to serve as an anchor. In the subsequent interaction $p$, the agent $p$ interacts with agent $p-1$ assuming that Algorithm~\ref{alg:repartition1} has not terminated prematurely before that instant. If the algorithm first terminates prematurely at $p$, then a necessary and sufficient condition is that
\begin{eqnarray}
\nonumber & & \frac{\pi}{N} + \frac{\pi}{p+1} < \theta^c_{p} - \theta^c_{p-1} \\
\nonumber \Leftrightarrow & & \frac{\pi}{N} + \frac{\pi}{p+1} < \pi + \sum_{m=3}^p \frac{\pi}{m} - \frac{\pi}{p+1} - \theta^c_N  - \frac{2\pi (p-1)}{N} + 2\pi \\
\Leftrightarrow & & \frac{2\pi p}{N} + \frac{2\pi}{p+1}
+ \sum_{m=p+1}^N \frac{\pi}{m} < 2\pi + \frac{\pi}{N}
\label{eq:nsc2}
\end{eqnarray}
together with $p < N$. It can be checked readily that the condition is not satisfied
for $N=4$ and Algorithm~\ref{alg:repartition1} terminates with $Q$ being
equipartitioned. For $N = 5$, this condition is satisfied for $p = 3$; i.e., there is an instantaneous loss of coverage between agents $2$ and $3$. This 
completes the proof.
\end{proof}

\section{Generalization Using Numerical Experiments}\label{sec:sim}
The results presented in the previous section show how a pathological series
of interactions can lead to a loss of coverage when the number of agents is small,
but still larger than a critical threshold. The same machinery can be extended
to cases where the number of agents is larger, but closed-form solutions are not easy 
to calculate because of the geometric setting of the problem. However, the 
necessary and sufficient conditions in Eqs.~\eqref{eq:nsc1} and \eqref{eq:nsc2} can 
be examined through a numerical parametric study for larger values of $N$ than 
those considered in the previous section. We restrict this
study to Algorithm~\ref{alg:repartition} and note that the
analysis can be repeated readily for Algorithm~\ref{alg:repartition1}.

It can be checked readily that the separation between two neighboring agents 
$k$ and $k+1$ ($2 < k < N-1$) at the end of Algorithm~\ref{alg:main} is given by
$$
\theta^c_{k+1} - \theta_k^c = \frac{(k+3)\pi}{(k+1)(k+2)},~~
\theta^c_N - \theta_{N-1}^c = \frac{2\pi}{N}
$$ 
Notice that, for large $k$, $\theta^c_{k+1} - \theta_k^c \approx \pi/(k+1)$. When $N$ become large, there exists $p$ such that the application of 
Algorithm~\ref{alg:repartition} and the accompanying interaction between agents $p+1$ and $p$ causes $\theta_p^c \prec \theta_{p-1}^c$ (informally, agent $p$ ``crosses'' $p-1$) . We refer to this as the C-crossover. There also exists $q$ such 
that $\theta_q^u \prec \theta_{q-1}^l$ and $S_{q,q-1} = 0$. We refer to this as the UL-crossover. 
The crossover index (the agent which crosses over its predecessor) 
is shown in Fig.~\ref{fig:crossover}, with a crossover value of $0$ indicating no crossovers. Note that neither of
these crossovers corresponds to loss of coverage; the UL-crossover means, in particular, that 
Algorithm~\ref{alg:repartition} cannot be applied in its present form once the crossover happens. Moreover,
the applicability of the analytical machinery developed in the proof of Thm~\ref{thm:main1} is restricted  
to $N \leq 19$.

The application of Algorithm~\ref{alg:repartition1} numerically for $N \in [8, 19]$ (the case $N \leq 7$ is covered
using Thm~\ref{thm:main1}) shows that Algorithm~\ref{alg:repartition1} terminates prematurely with loss
of continuous coverage as follows: between agents $1$ and $2$ for $N \in [7, 11]$; between agents $2$ and $3$
for $N \in [12, 16]$; and between agents $3$ and $4$ for $N \in [17, 19]$.

\begin{figure}[ht]
\centering
\includegraphics[width=0.6\textwidth]{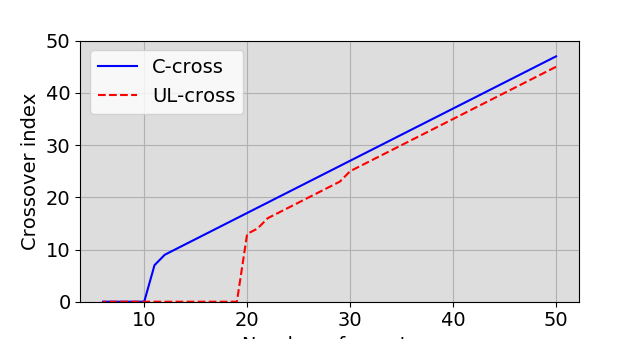}
\caption{The index of the first agent (starting with $N$) to cross its predecessor under Algorithm~\ref{alg:repartition}.
An index of $0$ implies that no crossover takes place.}
\label{fig:crossover}
\end{figure}

Note that a UL-crossover cannot be avoided when events occur as per Algorithm~\ref{alg:repartition} for large $N$. We investigate
a naive extension whose pseudocode is presented in Algorithm~\ref{alg:repartition2}. Notice that Algorithm~\ref{alg:repartition2}
involves a reversion to interaction with the nearest counter-clockwise neighbor and it halts when there is no
overlap between an agent and its nearest counter-clockwise
neighbor. From Fig.~\ref{fig:uncovered} it is evident
that the uncovered area reduces with increasing $N$, although the trend is not monotonic.
Although the size of the uncovered area reduces
rapidly with increasing $N$, the partition
at the end of Algorithm~\ref{alg:repartition2}
is seen to not be an equipartition.

\begin{algorithm}
\caption{Naive extension of Algorithm~\ref{alg:repartition}}
\label{alg:repartition2}
\begin{algorithmic}
\REQUIRE $N$ agents introduced as per Algorithm~\ref{alg:main}
\REQUIRE Algorithm~\ref{alg:repartition} run until
premature termination at agent $p$
\STATE $k = 1$, $run\_flag = 1$, $p = N$
\WHILE {$run\_{flag} = 1$ and $k \leq k_{\rm max}$}
\STATE Solve $p_c = \arg\min_{j} (d(\theta^c_p, \theta^c_j)~|~\theta_c^j \prec \theta^c_p)$
\IF {$S_{p,j} > 0$ or $\theta^l_{p} = \theta^u_{j}$}
\STATE Update: $\theta_{p_{c}}^c 
\leftarrow \theta_{p}^c \ominus 2\pi/N$; $p \leftarrow p_c$
\ELSE
\STATE $run\_{flag} = 0$; halt sequence
\ENDIF
\STATE Check for coverage
\IF {Q is equipartitioned}
\STATE $run\_{flag} = 0$; halt sequence
\ENDIF
\STATE $k \leftarrow k + 1$
\ENDWHILE
\end{algorithmic}
\end{algorithm}

\begin{figure}[ht]
\centering
\includegraphics[width=0.6\textwidth]{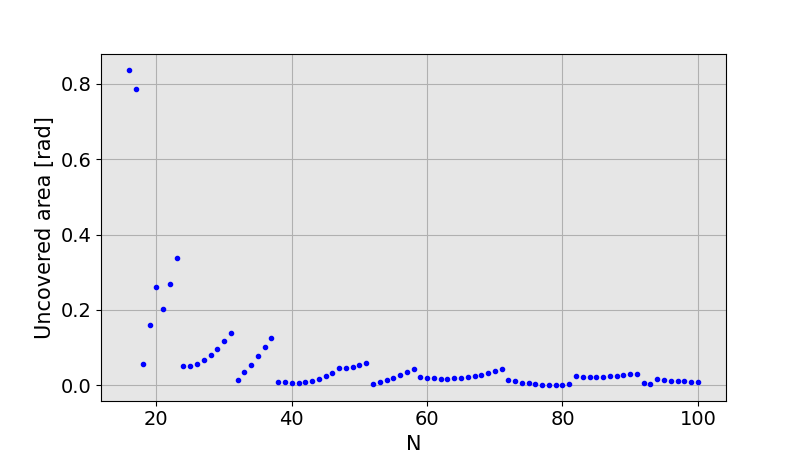} 
\caption{The uncovered area after Algorithm~\ref{alg:repartition2} terminates.}
\label{fig:uncovered}
\end{figure}

\section{Concluding Discussion}\label{sec:conc}
In this paper, we set out to investigate how the desirable properties of coverage algorithms may change with the number of agents, when the inter-agent communication is discrete, local, and event-driven. We modified a
well-known coverage algorithm by prescribing that agents use a certain 
lazy logic to repartition and resize their areas of responsibility. We applied
this algorithm to a simple problem involving lazy agents introduced sequentially
into a 2D annular domain. The annular geometry permitted our analysis to be restricted to a unit circle. We constructed a sequence of events that yields an
equipartitioned domain for a small but nontrivial number of agents, but fails when the number of agents exceeds a certain threshold. We
conducted numerical experiments to demonstrate how the algorithm performs
when the number of agents becomes large enough to the point where theoretical
analysis is no longer feasible using our methods.

Although carried out in a simplified setting, our work illustrates how the performance guarantees of coverage algorithms can be sensitive to the number of agents, unless the performance guarantees are proven rigorously beforehand 
for an arbitrary number of agents
(which is difficult for general problems involving event-triggered, gossip-based communication). It is not a straight-forward scaling problem, and 
the actual number of agents plays a critical role in the nature of the guarantees.

We assumed that the agents repartition and resize their areas of responsibility lazily (i.e., with $\epsilon =  0$ in Algorithm~\ref{alg:general}). 
Our results show that a degree of altruism might be necessary in order to guarantee coverage using a manifestation of Algorithm~\ref{alg:general} that works for an arbitrary number of agents. It remains an open problem to determine if there exists a sequence $\epsilon[t]$ which guarantees that Algorithm~\ref{alg:general}) leads to an equipartitioned domain for any sufficiently rich sequence of events.

\bibliography{BibMain}
\bibliographystyle{plain}

\end{document}